\DeclareMathOperator*{\argmin}{arg\,min}
\DeclareMathOperator{\E}{\mathbb{E}}
\def\a{\bm a}
\def\d{\bm d}
\def\w{\bm w}
\def\x{\bm x}
\def\z{\bm z}
\newtheorem{theorem}{Theorem}
\newenvironment{proof}{\rm {\em Proof}:}{\hfill$\square$}
\begin{document}
\title{Misconduct in Post-Selections and Deep Learning}

\author{\IEEEauthorblockN{Juyang Weng}
\IEEEauthorblockA{\textit{Brain-Mind Institute and GENISAMA} \\
\textit{Okemos, MI 48864 USA}\\
juyang.weng@gmail.com}
}
%
%

\maketitle
\begin{abstract}
This is a theoretical paper on ``Deep Learning'' misconduct in particular and Post-Selection in general.   As far as the author knows, the first peer-reviewed papers on 
Deep Learning misconduct are \cite{WengPSUTS21,WengMisleadAIEE23,WengMisconduct23}.  Regardless of learning modes, e.g., supervised, reinforcement, adversarial, and evolutional, almost all machine learning methods (except for a few methods that train a sole system) are rooted in the same misconduct--- cheating and hiding---(1) cheating in the absence of a test and (2) hiding bad-looking data.  It was reasoned in  \cite{WengPSUTS21,WengMisleadAIEE23,WengMisconduct23} that authors must report at least the average error of all trained networks, good and bad, on the validation set (called general cross-validation in this paper).   Better, report also five percentage positions of ranked errors.  From the new analysis here, we can see that the hidden culprit is Post-Selection.  This is also true 
for Post-Selection on hand-tuned or searched hyperparameters, because they are random, depending on random observation data. Does cross-validation on data splits rescue Post-Selections from the Misconducts (1) and (2)?   The new result here says: No.    Specifically, this paper reveals that using cross-validation for data splits is insufficient to exonerate Post-Selections in machine learning.   In general, Post-Selections of 
statistical learners based on their errors on the validation set are statistically invalid.    
\end{abstract}

\begin{IEEEkeywords}
Artificial intelligence, experimental protocols, hyper-parameters, post-selection, misconduct, deep learning, cross-validation, social issues.
\end{IEEEkeywords}

\section{Introduction}
\label{SE:intro}

The so-called ``Deep Learning'' is a scheme that trains multiple networks (or called models)
each of which starts from a different set of initial parameters, optionally processed further by error-backprop or value-backprop training.  The primary misconduct is to report only the luckiest few networks from Post-Selection---selecting a few luckiest networks from $n$ trained networks using a validation set (Post-Selection Using Validation Set, PSUVS) or a test set (Post-Selection Using Test Set, PSUTS)\cite{WengPSUTS21}.   Both PSUVS and PSUTS lack a test.  

Weng \cite{WengMisleadAIEE23} proposed a Nearest Neighbor With Threshold (NNWT) classifier that guarantees to reach a zero validation error, due to the Post-Selection step during training.   Weng \cite{WengMisconduct23} proposed a simpler version of NNWT, called Pure Guess Nearest Neighbor (PGNN) by dropping the threshold in NNWT.  PGNN also guarantees to reach a zero validation error, due to the Post-Selection step during training.  With a lack of tests, NNWT and PGNN should not generalize well, because they simply find the luckiest fit in the absence of a test.  In theory, they ``beat'' the errors of all well-known AI systems, such as AlphaGo, AlphaZero, AlphaFold, ChatGPT, and Bard since none of them have claimed a zero validation error. 

In the ACM's investigative report triggered by the author's complaint against the works of Turing Award 2018, the investigative team mentioned the future possibility of cross-validation.  Although none of the Turing Awarded works used
cross-validation, the report seems to imply that cross-validation might exonerate Post-Selections.

This paper scrutinizes 
the process of system development that contains Post-Selections and cross-validation for data splits.  In particular, we apply cross-validation {\em for data splits} at both the input data end and output data end.  In between we have Post-Selections.  We call such cross-validation {\em for data splits} traditional cross-validation. 

Weng \cite{WengPSUTS21,WengMisleadAIEE23,WengMisconduct23} extended the cross-validation for 
the luck of system parameters, including architecture hyperparameters and neuronal weights, called cross-validation {\em on
system parameters}.   In this paper, we called it {\em general} cross-validation. 

Normally, in the absence of Post-Selection, there is a wall between the data and models.  We provide a model set $M$ first and then collect data set $D$ next.   After a fixed model $m_i\in M$ is determined, the model $m_i$ is then exposed to a fit data set $F\subset D$ in the training stage.  The performance of the trained model $m_i(F)$ is validated using a disjoint validation set $V\subset D$, where $F$ and $V$ are disjoint.   The key point is the performance of 
$m_i(F)$ on a future test $T\subset D$ that is disjoint with $F$ and $V$.  In the absence of Post-Selection, the performance of $m_i(F)$ on $V$ is expected to be similar to the performance of $m_i(F)$ on $T$, if $V$ and $T$ satisfy the same distribution, although that are disjoint (i.e., no common data). 

Post-Selection breaks the wall.  The Post-Selection step is the second step of the training stage, where the first step is the fit step.   In the Post-Selection step of the training stage, the author 
trains $n$ models, $m_i(F)$, $i=1, 2, ... n$.   Among the $n$ fit models, top $m\ll n$ (e.g., $m=5$ and $n=10,000$) models are post-selected {\em after} the errors of all $n$ models on $V$ are known.  Typically, the Post-Selection pickes the luckiest $m_i(F)$ on $V$, from $i=1, 2, ... , n$.  Then, the author reports the luckiest error and hides all bad-looking errors. 

Is Post-Selection a valid statistical process?  

Post-selection is also controversial in statistics.  The proposals that used Post-Selection in 
statistical inference were called Post-Selection Inference PoSI \cite{Berk13} where Post-Selection is limited to linear models.  Namely, among the set of all linear models $M=\{ m_i(F_i) |i=1, 2, ... , n\}$ where $m_i(F_i)$ is a linear model that fits $F_i \subset F$, pick the one that is the luckiest on $V$.  Therefore, each different selection of subset $F_i \subset F$ results in a different model $m_i(F)$, although all the candidate models are linear.  

In the field of statistics, van der Laan et al. 2007  \cite{Laan07} proposed a so-called ``Super Learner'', whose goal is to find better learners from a set $M$ that contains any available types of models, from Random Forests to neural networks.   Through a Post-Selection step, the Super Learner gives a weight to each model in $M$, so that the cross-validated error is minimized using an exhaustive search.  
   
The Super Learner model contains cross-validation for input data splits but not for output data splits (e.g., see Fig.1 of \cite{Laan07}).  For simulation experiments, the authors \cite{Laan07} generated additional validation data for their {\em mathematical equation models} to validate the outputs from the Super Learner.   However, no such output validation was mentioned for their real-data simulations (HIV and diabetes).  This author does not think that a Super Learner can do well for HIV and diabetes data sets in a new future test.  Our Lost Luck Theorem below predicts that Super Learner performs poorly for real data.   

In general, unless  we have additional information that one or more
candidate models have abstraction, invariance, and transfer, like observations from mathematical equations, Post-Selection cannot transfer rare luck among many naive networks to a new future test.  Similarly, a rare luck in the last lottery draw will not be repeated in the future lottery draw.

This paper shows that the Super Learner is a misleading procedure, even after this author adds a nest 
cross-validation to it.   A lucky learner, generated by NNWT or PGNN, can ``sneak'' into the Super Learner and fool it.   The Super Learner would give a 100\% weight to the lucky learner, and a 0\% weight to all other bad-looking learners, and reports a cross-validated zero error.  However, the lucky learner Post-Selected by the Super Learner will do badly in a future test.  Adding nest cross-validation to Super Learner does not help either.   

This paper has the following novelties beyond the existing papers on Deep Learning misconduct \cite{WengPSUTS21,WengMisleadAIEE23,WengMisconduct23}:  

(1)  General cross-validation.   We analyze the effects from not only randomly initialized weights but also architecture hyperparameters that are typically manually tuned.  Often, optimal architecture hyperparameters were also searched for along a grid.  Our new analysis here has discovered that architecture hyperparameters must be uniquely generated in a closed form inside each learner.  Namely, any Post-Selection must report the distribution of multiple trained systems, at least their five percentage positions of ranked errors.   Specifically, picking the luckiest vector of architecture hyperparameters is statistically invalid since the luck does not transfer to a new future test.  For example, the five percentage positions of ranked errors from multiple tried values of the threshold in NNWT must be reported, if NNWT does not provide a sole threshold.  If Deep Learning reports average as the general cross-validation requires, the accuracy of Deep Learning should be uselessly bad, like what \cite{GaoBEAN21} showed for the MNIST data. 

(2)  Traditional cross-validation.  Next, we discuss cross-validated risk from $n$ candidate learners (not-nested) like Super Learner  \cite{Laan07}.   Because the cross-validation does not include all guessed learners that NNWT and PGNN tried but dropped, the cross-validation gives zero risk for NNWT and PGNN.  PGNN is much slower than NNWT to develop, but the Post-Selection like Super Learner does not care.  Because of the absence of a test, the cross-validated zero risk from the Super Learner does not say anything about the error in a future test.   This seems to be true with almost all Deep Learning papers.  Therefore, traditional cross-validation does not rescue Deep Learning in particular and Post-Selection in general. 

(3) Nested traditional cross-validation.  We further discuss nest cross-validation, one for input and one for output, beyond Super Learner  \cite{Laan07}.  However, NNWT and GPNN also produce a zero value for the cross-validated input error and cross-validated output error. 
This means that nest cross-validation cannot rescue deep learning either.  Therefore, the nested traditional cross-validation does not rescue Deep Learning in particular and Post-Selection in general. 

This theoretical paper does not present new experiments.   The author believes that even the best data set that money can buy is always task-specific and may suffer from the said misconduct in a fatal way.  Therefore, the 
misconduct protocol is primary and experimental performance data is secondary.

The remainder of the paper is organized as follows.  Post-Selection is reviewed in Section~\ref{SE:Post}.  
Section~\ref{SE:Proto} discusses Experimental Protocols. 
Section~\ref{SE:LL} provides new results about Post-Selection as the Lost-Luck Theorem. 
Section~\ref{SE:CVPost} presents new results for cross-validated Post-Selection. 
Section~\ref{SE:CVNPost} establishes new results for nest cross-validated Post-Selection. 
In Section~\ref{SE:social}, we discuss the implications of the results here to some major social science issues.
Section~\ref{SE:conclusions} provides concluding remarks. 

\section{Post-Selection Misconduct}
\label{SE:Post}
Currently, well-known Deep Learning networks include Convolution Neural Networks (CNNs), such as AlexNet \cite{Krizhevsky17}, AlphaGo Zero \cite{Silver17}, AlphaZero \cite{Silver18}, AlphaFold \cite{Senior20}, MuZero \cite{Schrittwieser20}, AlphaDev \cite{Mankowitz23}, those in IBM Debater \cite{Slonim21} among others \cite{Tuncal20,Alshammari21,Kusonkhum22}.  For open contests with AlphaGo \cite{Silver16}, this author alleged that humans did post-selections from multiple AlphaGo networks on the fly when test data were arriving from Lee Sedol or Ke Jie  \cite{WengMisleadAIEE23}.   More recent Deep Learning publications are in the author's misconduct reports submitted to {\em Nature} \cite{WengNatureReport21} and {\em Science} \cite{WengScienceReport21}, respectively.  Many other types of networks also belong to this misconduct category, such as LSTM \cite{Hochreiter97}, ELM \cite{GBHuang05,HuangGB06}, and evolutionary computations that use a validation set to select surviving agents from many simulated agents.  The misconduct is rooted in  Post-Selection from multiple guessed systems. 

Suppose that an available data set is in the form $D=\{(\x, \z) |\x \in X, \z \in Z\}$, where 
$X$ is the input space and $Z$ is the output space.   $D$ is partitioned into three mutually disjoint sets, a fit set $F$, a validation set $V$ (like a mock exam), and a test set $T$ (like a contest) so that $D$ is the union of three sets from the same distribution.
\begin{equation}
D=F\cup V \cup T.
\label{EQ:disjoint}
\end{equation}  
The trainer possesses the fit set $F$ and the validation set $V$.  He trains a learner to fit $F$ and then checks the error of the learner on $V$, but the trainer should not possess the test set $T$ since the test should be conducted by an independent contest agency.  Otherwise, $V$ and $T$ become equivalent, and PSUVS and PSUTS become the same. 

\begin{figure}[tb]
	\centering
	\includegraphics[width=1.0\linewidth]{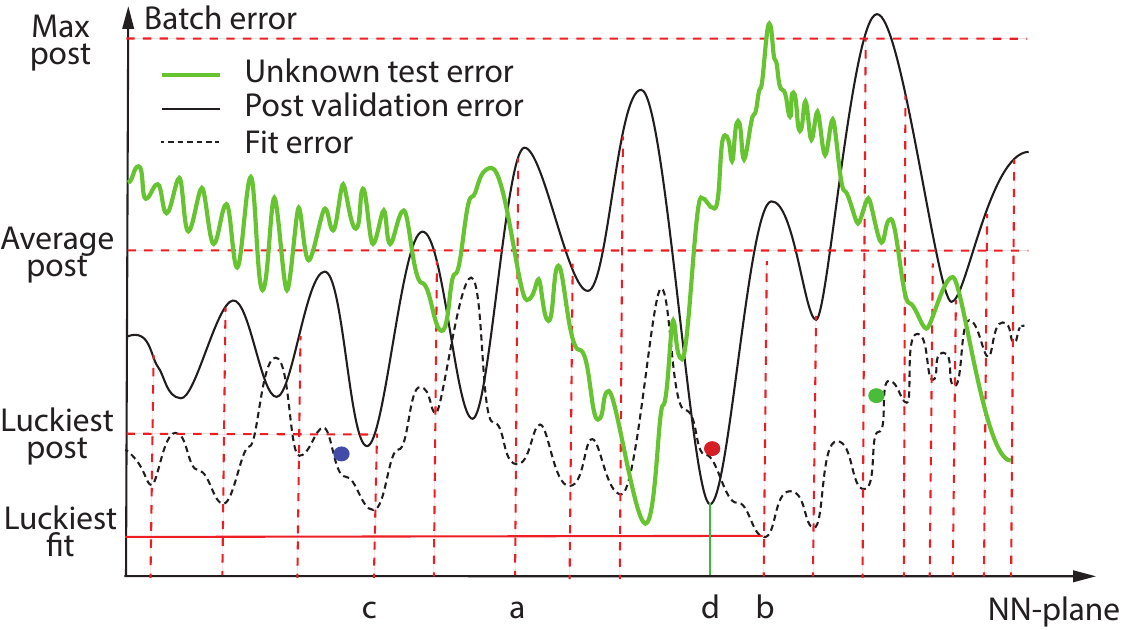}
	\caption{A 1D-terrain illustration for the fit error (dashed curve) from the fit data set, the post-selection validation error (solid thin curve) from the validation data set, and the unknown test error (thick green curve) from a future test set.  The green and blue NN-balls end at the max-post pit and the luckiest-post pit, respectively.  The red NN-ball will miss the lowest post error.  Only if $n$ is large ($n=3$ here), can the validation error from all $n$ network weight samples (i.e., cross-validation) better predict the expected error on the unknown test set.  The validation set and the test set have similar distributions here but are disjoint.  Figure modified from \cite{WengCLAIEE22}.}
	\label{FG:PSUTS-Fit-Error-1D}
\end{figure}

Fig.~\ref{FG:PSUTS-Fit-Error-1D} illustrates how any Post-Selection on a validation set $V$ does not help a future test on a test set $T$.  Each learner is represented by an NN-ball (Neural Network ball).  A gradient-based learning algorithm (e.g., error backprop or value backprop) makes all randomly initialized NN-balls roll down the hill on the terrain whose height is the error of NN-ball to fit $F$.  The errors on $F$, $V$, and $T$ are three different terrains. 

The alleged misconduct, Post-Selection, in so-called Deep Learning was originally raised in a YouTube video ``BMTalk 3D Episode 6: Did Turing Award Go to Fraud?'' posted on June 4, 2020.   The first peer-reviewed paper about the misconduct appeared in 2021 \cite{WengPSUTS-ICDL21}.  To 
explain why the Deep Learning scheme is misconduct, Weng 2023 \cite{WengMisleadAIEE23} proposed a simple classification method called
Nearest Neighbor With Threshold (NNWT) and proved mathematically that it reaches a zero error on any validation set under the misconduct protocol.   This NNWT was simplified by Weng 2023 \cite{WengMisconduct23} into   Pure-Guess Nearest Neighbor (PGNN) by dropping the threshold.  It was proven  \cite{WengMisconduct23} that PGNN also reaches a zero error on any validation set under the misconduct protocol.  Both NNWT and PGNN should be tested using a new test set that is not in the possession of the authors,  but almost all Deep-Learning publications have not published such a new test.  The misconduct consists of the following two:
\begin{enumerate}
\item Misconduct 1: Cheating in the absence of a test.   This is true for both PSUVS and PSUTS since the test set is often in the possession of the authors so the test set is used as a validation set and the validation errors were falsely reported as test errors.   
Namely, so-called ``test data'' in almost all Deep Learning publications are, as alleged by Weng, only training data, not a test set.   
\item Misconduct 2: Hiding bad-looking data.  Furthermore, Weng alleged that all bad-looking data, except that from the luckiest network, were hidden and not reported.  
\end{enumerate}  
Without the power of abstraction, NNWT and PGNN are expected to generalize badly, like a few Deep Learning papers that report average errors \cite{GaoBEAN21}.

Similarly, almost all performance data from evolutional methods are also misconduct because of the use of Post-Selection.  Namely, the performances of all individual networks in an evolutionary generation should be reported.  Furthermore, a reasonably disjoint test set must be used to evaluate the generalization of the luckiest network.

Furthermore, those methods that use a human programmer to tune parameters are also misconduct.  Such methods are numerous, from Neocognitron of Fukushima \cite{Fukushima80,Fukushima83}, to HMAX of Poggio et al. \cite{Serre07}, to almost all neural networks mentioned earlier, including Large Language Models (LLMs), Transformers, ChatGPT, and Bard.  This new information represents an advance from the author's earlier papers \cite{WengPSUTS21,WengMisleadAIEE23,WengMisconduct23}.  Namely, the authors must report at least the average and the five percentage positions, 0\%, 25\%, 50\%, 75\%, and 100\%, of the ranked errors of all trained networks.  Only such more informative data can give a reasonable description of the error distribution in a future test by even 
the luckiest network on the validation set.  In the absence of further information (e.g., about the abstraction and invariance inside the network), 
the expected error of the luckiest network on $V$ in a future test $T$ is the same as any other trained network, as I will prove below.  From Fig.~\ref{FG:PSUTS-Fit-Error-1D}, we can intuitively see why---the luck of the luckiest network on $V$ does not likely translate to a future test $T$. 

\section{Experimental Protocols}
\label{SE:Proto}
Next, let us discuss experimental protocols with the goal of the performance of the target system during future deployments.  The statistical principle that we focus on in this paper is a well-known protocol, called cross-validation (for data splits).  

The traditional cross-validation \cite{DudaHartStork,Laan03,Laan06} is for eliminating the luck in splitting all available data into a fit set and a validation set.   For example, if every validation point in the validation set is surrounded by 
some points in the fit set, the validation error is likely small.  For example, the nearest neighbor method, without much need for abstraction and invariance, will do reasonably well since for every point in the validation set---there is a nearby point in the fit set. 

Here, we extend the {\em traditional} cross-validation  \cite{DudaHartStork,Laan03,Laan06} to what is called 
{\em general} cross-validation.   To estimate the expected performance in a future test, we require 
resampling the parameter space that the programmer cannot directly compute in a closed form, such 
as the architecture hyperparameters and neuronal weights, by reporting the average of all resampled networks.    
 
 Currently, few Deep Learning networks report cross-validated errors.   One superficial reason is the high computational cost of training one network which typically involves GPUs.   However, this does not seem to be a valid reason, because almost all Deep Learning networks train $n\ge 2$ networks, e.g., $n=20$ in \cite{Graves16} and $n=10,000$ in \cite{Saggio21}.  The authors only report the error of the luckiest networks on a validation set that is in the procession of the authors.   The performances of the remaining $n-1$ bad-looking networks are hidden, and not reported.  Therefore, it is not the issue of computational cost, as such a cost has already been spent,  The key issue is what to report and whether bad-looking data are hidden. 
 
This paper will prove below that the luckiest network on the validation set should give approximately the average error in a future test.   This argument corresponds to the Lost-Luck theorem to be established in this paper. 

The Lost-Luck theorem is a generalization of the traditional cross-validation, by going beyond the splits of the data, but instead also including any randomly generated variables or manually tuned variables, as long as the set of tried system parameters $P=\{P_1, P_2, ... , P_n\}$ contains multiple elements ($n\ge 2$).  In splitting data into 
a fit set and a validation set, the set of system parameters contains $n$ elements, where each element 
corresponds to a different split between the fit set and the validation set.   
The  Lost-Luck theorem states that the average error of all trained systems, each from randomly generated parameters or manually tuned parameters, is a minimum variance estimate of the error in a future test.   

In our NNWT example, the threshold in NNWT is unlikely transferable to a future test, depending on the complexity of the data set or the AI problem.  In other words, there is no threshold for the distance between a query and its nearest neighbor so that the query and the nearest neighbor always share the same label.  For example, a shift in a pixel position may change the label of an object (e.g., a square vs. a rectangle).   

\section{Lost-Luck Theorem}
\label{SE:LL}

For simplicity, we consider only an episodic system $f: X \mapsto Z$, where $X$ consists of vectors 
of inputs and $Z$ consists of a finite set of class labels as desirable outputs.   For systems with state,  systems for vector outputs, and systems with internal representations, see \cite{WengMisconduct23} about how $Z$ is
extended.  The results here apply to state-based systems by changing $Z$ to $S\times X$, where $S$ is the set of states. 

\subsection{Cross-Validation on Data Splits}
Given a data set $D=(\d_1, \d_2, ..., \d_d)$, where $\d_i=(\x_i, l_i)$ consists of an input vector $\x_i \in X$, 
and $l_i \in Z$, $i=1, 2, ... , d$.  We need to divide the data set $D$ into two disjoint sets, a fit set $F$ and a validation set $V$, so that $D=F\cup V$.   We do not call $F$ training set because as we have seen in \cite{WengMisconduct23} $V$ was also used in the training stage due to Post-Selection.  Traditional cross-validation is a scheme for organizing and splitting the data set $D$.   Let us briefly overview four well-known types of cross-validation \cite{Laan03}.


$n$-fold cross-validation: The set $D$ is randomly divided into $n$ mutually exclusive and exhaustive sets 
$D_v, v=1, 2, ... , n$, of as nearly equal size as possible.  For each data split $D_v$, the corresponding fit set is defined as $F_v=D-D_v$ and the validation set is $V_v= D_v$.  Therefore, we get $n$ ways to split $D$ into a fit set and a validation set.   

 Leave-one-out cross-validation (LOOCV):   LOOCV is $n$-fold cross-validation when the fold size is 1, $n=d$.


Below, we consider $n$-fold cross-validation, with LOOCV as its special case.  The conclusions from these two types of cross-validation also hold for the other types of cross-validation, as long as the randomness in these four types is pseudo-random, which is typically true with computer-generated pseudo-random numbers.   For example, NNWT and PGNN classifiers need to know how the data are split in Super Learner discussed below to provide the perfect classifier for the Super Learner. 

Supposing each data split is 
equally likely, we conduct $n$ experiments and each experiment gives the error $e(L_v(F_v), V_v))$ of the learner $L_v(F_v)$ on $V_v$.  The cross-validated error is the average error across all $n$ experiments:
\begin{equation}
\label{EQ:sampleaverage}
\frac{1}{n} \sum_{v=1}^{n} e(L_v(F_v), V_v))
\end{equation}   

When we compare any two learning systems, they must be on the same set of learning conditions.
Weng \cite{WengMisconduct23} proposed four Learning  Conditions for comparing learning systems:  (1) A body including sensors and effectors, (2) a set of restrictions of learning framework, including whether task-specific or task-nonspecific, batch learning or incremental learning, and network refreshing rate; (3) a training experience that includes the fit set and the order of elements in the fit set, and (4) a limited amount of computational resources including the number of neurons in the system, the number of weights in the system, the storage size of the system, and the time, the money, and man-power used during the development of the system.

Let us further consider Learning Condition (4).   Suppose that one is allowed to use 
a finite, but unspecified, amount of resources in terms of computational resources and the time for system development.  This is often the case with a publically listed company.   Namely, a rich company can afford more resources.  Using NNWT and PGNN, we will see that this loop-hole will ``beat'' the traditional cross-validation---cross-validation on date splits. 

\subsection{Fit Error and Validation Error}

All publicly available data sets provide the test set to the trainer.  This is a big problem since it invites misconduct.  Surprisingly, in all ImageNet Contests \cite{Russakovsky15} the test set was publically released (in the form of an unlimited number of tests available from the test server of the contest organizer) long before the contest results were due.   Therefore, the organizers seem to have mismanaged the ImageNet Contests.

Let $\a\in A$ be a hyperparameter vector of the architecture (e.g., including receptive fields of neurons).  
Let $\w\in W$ be a weight vector of neural networks.
A neural network with $\a$ and $\w$ is denoted as $N(\a, \w)$.  The fit error of $N(\a, \w)$ is the neural network's 
average of output errors using all the input-output pairs in $F$: $e_f(\a, \w) = g_e (N(\a, \w) |F)$, where
\begin{equation}
\label{EQ:expect}
g_e (N(\a, \w) | F) = \displaystyle \mathop{\mathbb{E}}_{(\x,\z)\in F} ( \mbox{abs error of $\z$ from $N(\a, \w)$}  |F)
\end{equation}
where $\E_{\x\in X}$ is the expectation operator across all $\x \in X$.  
Note the difference between Eq.~\eqref{EQ:sampleaverage} and Eq.~\eqref{EQ:expect}.  The former is for a limited number of samples but the latter is for probablity. 
The validation error of $N(\a, \w)$ is the neural network's 
average of output errors using all the input-output pairs in 
\begin{equation}
\label{EQ:expectV}
e_{\rm v}(\a, \w) = g_e (N(\a, \w) | V).
\end{equation}
We use a practical number $n$ in Eq.~\eqref{EQ:sampleaverage} to approximate Eq.~\eqref{EQ:expectV}.
With other factors unchanged, the larger $n$, the better the approximation.

\subsection{Search Architectures but Randomly Sample Weights}

Suppose we have a hyperparameter vector $\theta \in \Theta$ and random variable $e$.   The probability density function $f_{\theta}(e)$ deterministically depends on $\theta$.  Recalling the definition of probability 
density function $f_{\theta}$, the probability for the observed value of $e$ to be less than $x$ is called the cumulative distribution function: 
\[
F_{\theta} (x) = P (e < x ) = \int_{-\infty}^{x}  f_{\theta}(e)de
\]

In Deep Learning, there are two kinds of parameters in $\theta$, $\theta = (\a, \w)$ where $\a$ is the hyperparameter vector for the architecture and $\w$ is the weight vector of the network. 

Since the dimension of  $\w$ is extremely high (e.g., 60-million dimensional in \cite{Krizhevsky17}),  
Deep Learning samples multiple initial weights $\w_i^{(0)}$, $ i=1, 2, ... , n$ (e.g., $n=20$ in \cite{Graves16} and $n=10,000$ in \cite{Saggio21}) using a pseudo-random seed, and then a greedy gradient-based search reaches a local minima weight $\w_i$, $ i=1, 2, ... , n$.   

How should we deal with $n$ networks?   The Deep Learning misconduct reports the luckiest network that corresponds to the lowest error (on $V$ or $V\cup T$) but hides all other less lucky networks.  Why is this a misconduct? 

\subsection{Report Random Distributions}
Suppose that $\theta$ is either random or deterministic.  Deep Learning trains $n\ge 2$ networks and measures the error 
$e(\theta_i)$, $i=1, 2, ... n$, of the network on $V$ with parameter vector $\theta_i$.  Regardless $\theta$ is random or deterministic initially, $e(\theta_i)$ is considered random, similar to a biological system. 

Randomly sampling $\theta$: Because we do not know the distribution of the  
the probability density $f (\x, \z, \theta)$ of the machine learner $f$.  Worse, $f$ depends on a random system parameter (vector) $\theta$.  We randomly sample the space of $ \theta$ as  $\theta_i$, and measure error $e(\theta_i)$, $i=1, 2, ... , n$.  We would like to give the best estimate $e$ that minimizes the  mean square error (MSE)
\begin{equation}
\label{EQ:min}
e^* = \argmin_{e\in R} g(e) = \argmin_{e\in R} \sum_{i=1}^n (e- e(\theta_i))^2 P_i 
\end{equation}
where $P_i$ is the probability of $\theta _i$.  It is easy to prove below that the above best solution $e^*$ for $e$ is
\begin{equation}
\label{EQ:e*}
e^* = \sum_{i=1}^n e(\theta_i) P_i 
\end{equation}
In the absence of further information, we assume $P_i=1/n$, the above expression gives
\begin{equation}
\label{EQ:e*n}
e^* = \bar{e} = \frac{1}{n}  \sum_{i=1}^n e(\theta_i)  
\end{equation}
From the above derivation, we have the following theorem:
\begin{theorem}[general cross-validation]
\label{TM:CV}
The minimum MSE estimate of a random variable $e$ from $n$ random samples, $e(\theta_i)$, $i=1, 2, ... , n$, is its probability mean Eq.~\eqref{EQ:e*}.  Thus, the general cross-validation should use  Eq.~\eqref{EQ:e*n} if we assume each sample is equally likely. 
\end{theorem}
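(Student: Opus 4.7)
The plan is to prove Theorem~\ref{TM:CV} by elementary one-variable convex optimization. The objective $g(e) = \sum_{i=1}^n (e - e(\theta_i))^2 P_i$ is a quadratic in the single real variable $e$ whose leading coefficient is $\sum_i P_i$. Since the $P_i$ are probabilities, this sum is strictly positive, so $g$ is strictly convex and admits a unique global minimizer which is determined by its first-order optimality condition. There is no constraint on $e$, so calculus suffices; no Lagrangian or inequality machinery is needed.

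First I would differentiate $g$ with respect to $e$, obtaining $g'(e) = 2\sum_{i=1}^n (e - e(\theta_i)) P_i$. Setting $g'(e)=0$ and using $\sum_i P_i = 1$, the unique critical point is $e^* = \sum_{i=1}^n e(\theta_i) P_i$, which is exactly Eq.~\eqref{EQ:e*}. Next I would confirm minimality via $g''(e) = 2\sum_i P_i = 2 > 0$, so $e^*$ is indeed the global minimum of the MSE criterion in Eq.~\eqref{EQ:min}. Finally I would specialize to the equally likely case $P_i = 1/n$, which substitutes directly into Eq.~\eqref{EQ:e*} to yield the sample mean $\bar{e} = \frac{1}{n}\sum_{i=1}^n e(\theta_i)$ of Eq.~\eqref{EQ:e*n}, giving the second claim of the theorem.

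The calculation itself is routine, so the only real obstacle is interpretive rather than technical: justifying why the MSE criterion in Eq.~\eqref{EQ:min}, rather than a quantile or worst-case loss, is the correct figure of merit for summarizing the error distribution induced by the random system parameter $\theta$. In the proof I would simply note that Eq.~\eqref{EQ:min} is the standard $L^2$-Bayes-risk formulation, in which the best constant summary of a random variable is its probability mean, with the samples $\theta_i$ playing the role of the randomization source; this is what earns the name \emph{general} cross-validation, extending the data-split averaging in Eq.~\eqref{EQ:sampleaverage} to averaging across sampled architectures and weight initializations. The fact that the luckiest $e(\theta_i)$ is \emph{not} the MSE-optimal summary is then an immediate corollary that will be invoked later to justify the Lost-Luck statement.
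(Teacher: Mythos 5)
Your proposal is correct and follows essentially the same route as the paper: differentiate $g(e)$, set the derivative to zero, use $\sum_i P_i = 1$ to obtain Eq.~\eqref{EQ:e*}, then specialize to $P_i = 1/n$ for Eq.~\eqref{EQ:e*n}. Your added second-derivative check confirming strict convexity is a small improvement in rigor over the paper's proof, which stops at the first-order condition, but it does not change the argument.
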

\begin{proof}  Taking the derivative of the term, denoted as  $g(e)$, under minimization in Eq.~\eqref{EQ:min} with respect to 
$e$, and then setting it to zero, we get
\[
0 = \frac{d g(e) }
{d e} = \sum_{i=1}^n (e- e(\theta_i)) P_i
\]
Using $\sum_{i=1}^n P_i=1$, we simplify the above and get Eq.~\eqref{EQ:e*}.
The above reasoning gives the remaining proof.
\end{proof}

Although it is for general cross-validation, Theorem~\ref{TM:CV} is also the optimality of the traditional cross-validation for data splits  \cite{McLachlan92,Cherkassky98,DudaHartStork}---reporting the average of $n$ data splits. 

Theorem~\ref{TM:CV} has also established that Post-Selection for the luckiest network on $V$ is statistically invalid.  Instead of reporting the luckiest error on $V$, we must report the average errors of all $n$ trained networks on $V$.
Otherwise, the reported luckiest error inflates the expected performance in a new future test on $T$. 

The sample standard deviation of $\bar{e}$ in Eq.~\eqref{EQ:e*n} should also be reported: 
\begin{equation}
\label{EQ:std}
\hat\sigma =  \sqrt{ \frac{1}{n-1} \sum_{i=1}^n (e(\theta_i)-\bar{e}) ^2}.
\end{equation}

However, the sample standard deviation is often not informative.  It is better to report the 5 percentage locations,  0\%, 25\%, 50\%, 75\%, and 100\%, of the ranked errors on $V$.

\subsection{Statistical Flaw of Post-Selection}

Suppose one ranks the errors on $V$, 
\[
e(\theta_1) \le e(\theta_2) \le ... \le e(\theta_n). 
\]
 Since $e(\theta_i)$'s are errors, he might fraudulently report the luckiest sample $e(\theta_1)$ (like reporting the luckiest lottery ticket), but $e(\theta_1)$ badly under-estimates $e=\E(e(\theta)| T)$ (where $T$ is like a new lottery!).  The smallest sample $e(\theta_1)$ is only the luckiest on $V$ but only the average in Eq.~\eqref{EQ:e*n} is the best estimator on unknown $T$ since it reaches the minimum variance $\hat\sigma$ on $T$.   Recalling basic knowledge of probability, among $n$ random samples, 
the minimum sample badly underestimates the mean of all $n$ samples.
  
Therefore, PSUVS uses the luckiest (smallest) sample $e(\theta_1)$ to replace the average $\hat{e}$ of $n$ samples on $V$, thus, badly under-estimating $e=\E(e(\theta)| T)$ on $T$.   PSUTS is worse than PSUVS ethically since $T$  should not be leaked to the author at all!

\subsection{Hyperparameters Are Also Random}
Suppose $\theta $ is an architecture hyperparameter whose definition is known but not its value (e.g., learning rate).  Then, since the best (or hand-tuned)  $\theta$ depends on 
random data set $D$, at least its $F$ and $V$.  However, we deterministically search for $\theta$,  i.e., 
the probability density $f_{\theta} (\x, \z)$ is deterministic on parameter $\theta$, but the optimal $\theta $ 
must depend on random $F$ and $V$.   Therefore, as a function of random $F$ and $V$, 
$\theta$ is a random function that depends on $F$ and $V$.   Therefore, we should not hope
\begin{equation}
\label{EQ:transfer}
e(\theta_1, V) \stackrel{?}{\approx} \displaystyle \mathop{\mathbb{E}}_{T}  e(\theta_1, T),
\end{equation}
which is hardly possible since $\theta_1$ is the luckiest only on $V$ not necessarily the luckiest on most $T$s.  Instead, we should report the minimum variance estimator for the error from $\theta_i$ on unknown future $T$:  
\begin{equation}
\label{EQ:reportAverage}
\displaystyle \mathop{\mathbb{E}}_{T} e(\theta_i, T)  \approx \frac{1}{n} \sum_{i=1}^{n} e(\theta_i, V).
\end{equation}
$i=1, 2, ... , n$.
The expression is from Theorem~\ref{TM:CV} because the general cross-validation procedure told us that the minimum MSE estimate of any random parameter $\theta_i$ on a future test set $T$ is the sample 
average in Eq.~\eqref{EQ:e*n}.  In particular, this is also true for the luckiest $\theta_1$ on $V$, namely,
\begin{equation}
\label{EQ:1reportAverage}
\displaystyle \mathop{\mathbb{E}}_{T} e(\theta_1, T)  \approx \frac{1}{n} \sum_{i=1}^{n} e(\theta_i, V).
\end{equation}
We have derived the following theorem:
\begin{theorem}[Lost Luck]
\label{TM:LL}
The luckiest network on the validation set $V$ has the same expected error as any other less lucky networks and 
the following is the minimum MSE estimate of their errors in a future test:
\begin{equation}
\label{EQ:LL}
\displaystyle \mathop{\mathbb{E}}_{T}  e(\theta_1, T) =
\displaystyle \mathop{\mathbb{E}}_{T}  e(\theta_i, T) \approx
 \frac{1}{n} \sum_{i=1}^{n} e(\theta_i, V).
\end{equation}
$i=1, 2, ... , n$.
\end{theorem}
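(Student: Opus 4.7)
The plan is to reduce the theorem to Theorem~\ref{TM:CV} via a short two-step argument: (i)~establish that, under independence of $V$ and $T$, the rank of a parameter on $V$ carries no information about its expected error on $T$, so that all $n$ parameters share a common expected test error; and (ii)~apply Theorem~\ref{TM:CV} to conclude that the sample average of the validation errors is the minimum-MSE estimator of this common quantity.

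First, I would formalize the sampling model. The $\theta_i$'s are drawn i.i.d.\ from some distribution on $\Theta$ (or, in the hyperparameter case, they are deterministic functions of i.i.d.\ random splits of $D$ and are therefore themselves exchangeable random variables), while $V$ and $T$ are disjoint samples from the same distribution, with $T$ independent of the entire selection pipeline that produced $\theta_1,\ldots,\theta_n$. The error $e(\theta_i,T)$ then depends only on $\theta_i$ and $T$, whereas the re-indexing that enforces $e(\theta_1,V)\le e(\theta_2,V)\le\cdots\le e(\theta_n,V)$ depends only on $\{\theta_i\}$ and $V$. Because $T$ is independent of $V$ and the $\theta_i$'s are exchangeable, for any permutation $\pi$ the joint law of $(\theta_{\pi(1)},\ldots,\theta_{\pi(n)},T)$ coincides with that of $(\theta_1,\ldots,\theta_n,T)$. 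Taking expectations yields the ``lost luck'' equality $\E_T e(\theta_1,T)=\E_T e(\theta_i,T)$ for every $i$.

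Second, let $e^{\ast}$ denote this common expected test error. The $n$ quantities $e(\theta_1,V),\ldots,e(\theta_n,V)$ may then be viewed as $n$ random samples of an error variable whose mean on a fresh draw from the data distribution is $e^{\ast}$ (since $V$ and $T$ share the same distribution). By Theorem~\ref{TM:CV}, with each $\theta_i$ assigned the uniform prior $P_i=1/n$, the minimum-MSE estimator of $e^{\ast}$ is the arithmetic mean in Eq.~\eqref{EQ:e*n}, which is exactly the right-hand side of Eq.~\eqref{EQ:LL}. The $\approx$ in the theorem statement absorbs the gap between the finite-sample validation-set average and the true expectation $\E_T$, which is controlled by $n$ and by $|V|$.

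The main obstacle I expect is making the exchangeability step fully clean. One must be careful that ``$\theta_1=$ luckiest on $V$'' is a $V$-measurable re-indexing of originally exchangeable samples, and that $T$ is independent of this re-indexing; if instead the procedure producing $\theta_i$ mixed information from $T$ (as in PSUTS), exchangeability would fail and the equality would collapse. A secondary subtlety is to justify the $\approx$ sign rigorously: replacing $\E_T$ by a finite sample average on $V$ requires the common-distribution assumption for $V$ and $T$ stated in Section~\ref{SE:Post}, together with a standard law-of-large-numbers bound; I would relegate these quantitative refinements to a remark rather than pursue them inside the proof.
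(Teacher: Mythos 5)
Your proposal is far more ambitious than the paper's own proof, which consists of a single sentence: the hyperparameters depend randomly on $F$ and $V$, hence Theorem~\ref{TM:CV} applies. You instead try to actually \emph{derive} the ``same expected error'' claim from exchangeability of the $\theta_i$ and independence of $T$ from the selection pipeline, which the paper never attempts. Your second step (invoking Theorem~\ref{TM:CV} with $P_i=1/n$ to identify the validation average as the minimum-MSE estimator of the common test error) matches the paper's intent and is fine modulo the first step.

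The exchangeability step, however, has a genuine gap. Permutation invariance of the joint law of $(\theta_{\pi(1)},\ldots,\theta_{\pi(n)},T)$ holds for a \emph{fixed} permutation $\pi$; but the re-indexing that makes $\theta_1$ ``the luckiest on $V$'' is a \emph{random}, $(\{\theta_i\},V)$-measurable permutation $\sigma$, and the law of the sorted tuple is not that of the unsorted one. Concretely, writing $q(\theta)=\E_T\, e(\theta,T)$ for the intrinsic quality of a parameter and using independence of $T$ from everything else,
\[
\E\bigl[e(\theta_{\sigma(1)},T)\bigr]=\sum_{j=1}^{n}\E\bigl[q(\theta_j)\,\mathbf{1}\{\sigma(1)=j\}\bigr],
\]
and the indicator $\mathbf{1}\{\sigma(1)=j\}$ is \emph{not} independent of $q(\theta_j)$: both depend on $\theta_j$, and a parameter with genuinely low $q$ is more likely to win on $V$. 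Independence of $T$ from $V$ does not break this coupling. So whenever the validation error carries any signal about $q$ (e.g., half the $\theta_i$ are genuinely better than the other half), the selected network has strictly lower expected test error than a randomly chosen one, and the claimed equality fails. The equality holds only under the additional assumption that the validation ranking is pure noise with respect to $q$ --- the paper's informal ``in the absence of further information'' caveat --- and neither your argument nor the paper's states or uses this assumption explicitly. To repair your proof you would need to add it as an explicit hypothesis (e.g., $e(\theta,V)$ independent of $q(\theta)$, or $q$ almost surely constant across the $\theta_i$); as written, the sentence ``taking expectations yields the lost-luck equality'' silently applies a fixed-permutation identity to a data-dependent permutation.
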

\begin{proof}
As we discussed above, the hyperparameter randomly depends on $F$ and $V$, and therefore, the general cross-validation principle in Theorem 
~\ref{TM:CV} applies. 
\end{proof}

In summary, there is no difference between a random $\theta_i$ or a deterministic $\theta _i$ at all in terms of our goal to give the minimum MSE of $\theta_i$ on a future test set $T$.

Summarizing the above derivation, we have the following theorem.
\begin{theorem}[Must report all trained networks]
\label{TM:reportAll}
 From each searched architecture $\a_i$, $i=1, 2, ... , k$, the optimal network weight vector is $\w_{i,j}^*$, $j=1, 2, ... , n$.  We get $kn$ networks $N(\a_i, \w_{i,j}^*)$, $i=1, 2, ... , k$, $j=1, 2, ... , n$.   Then, the minimum MSE estimate of the expected error of any network $N(\a_i, \w_{i,j}^*)$ among these $kn$ networks in a future test $T$  is the average of their errors on $V$.
 \end{theorem}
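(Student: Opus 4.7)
The plan is to reduce Theorem~\ref{TM:reportAll} to a direct application of the Lost Luck Theorem (Theorem~\ref{TM:LL}) by merging the architecture index $i$ and the weight-initialization index $j$ into a single parameter index. First I would set $\theta_{i,j} = (\a_i, \w_{i,j}^*)$ and linearly reindex the collection $\{\theta_{i,j} : 1\le i\le k,\ 1\le j\le n\}$ as a list $\{\theta_m\}_{m=1}^{kn}$, so the $kn$ trained networks $N(\a_i, \w_{i,j}^*)$ become $N(\theta_m)$ for $m=1,\ldots,kn$, each with validation error $e(\theta_m, V)$.

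Second, I would verify that each $\theta_m$ satisfies the hypothesis of Theorem~\ref{TM:LL}. The searched or hand-tuned architecture $\a_i$ depends on the random split $F,V$ (as argued around Eq.~\eqref{EQ:transfer}), while $\w_{i,j}^*$ depends on a pseudo-random initial weight $\w_{i,j}^{(0)}$ and on $F$ through the gradient-based search. Hence each combined $\theta_m$ is a random parameter in exactly the sense for which Theorem~\ref{TM:LL} was proved. Applying that theorem to the enlarged pool of $kn$ samples would give, for each $m$,
\[
\displaystyle \mathop{\mathbb{E}}_{T} e(\theta_m, T) \approx \frac{1}{kn} \sum_{m'=1}^{kn} e(\theta_{m'}, V),
\]
which is precisely the claim once $\theta_m$ is translated back to $(\a_i, \w_{i,j}^*)$.

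The main obstacle, which I expect to be more notational than substantive, is justifying the uniform weighting $P_m = 1/(kn)$ over the joint pool instead of a nested weighting (first average over $j$ for fixed $i$, then over $i$). I would defend the flat prior exactly as in the closing step of Theorem~\ref{TM:CV}: absent further distributional information about how $\a_i$ was searched relative to how $\w_{i,j}^{(0)}$ was drawn, every realized pair $(\a_i, \w_{i,j}^*)$ should enter the estimator with equal weight. A secondary subtlety is that Theorem~\ref{TM:LL} was phrased around the luckiest sample on $V$; here one must explicitly note that the minimum-MSE conclusion in Eq.~\eqref{EQ:LL} holds \emph{for every} index $m$, not only $m=1$. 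Because the right-hand side does not depend on $m$, the same grand average is simultaneously the minimum MSE estimate for every one of the $kn$ networks, including any post-selected luckiest $N(\a_i, \w_{i,j}^*)$, completing the reduction.
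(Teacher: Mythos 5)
Your proposal follows essentially the same route as the paper's own proof, which likewise observes that $\a_i$ and $\w_{i,j}^*$ all depend on the random data sets $F$ and $V$ and then invokes the Lost Luck Theorem~\ref{TM:LL} directly. Your version merely makes explicit the reindexing of the $kn$ pairs $(\a_i,\w_{i,j}^*)$ into a single parameter pool and the uniform weighting $P_m=1/(kn)$, details the paper leaves implicit.
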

\begin{proof}
Since $\a_i$ and $\w_{i,j}^*$ all depend on the random data sets $F$ and $V$, the conclusion directly follows from the Lost Luck theorem. 
\end{proof}

In summary, the Deep Learning scheme generates $n\ge 2$ networks, not one.  This resampling method is indeed used in cross-validation \cite{McLachlan92,Cherkassky98,DudaHartStork}, whose goal is to estimate how a statistical scheme predicts on unknown future test $T$ using the average behavior of all $n$ random resamples.  The larger the $k$ value, the better the architectures are sampled, 
The larger the $n$ value, the better the weight vectors are sampled. 
 But larger $k$ and $n$ are computationally expensive.  
 
In Deep Learning, both PSUTS (many authors did) and PSUVS (few authors did) badly under-estimate the expected error on $T$ by replacing the 
average of samples in Eq.~\eqref{EQ:LL} with that of the luckiest $N(\a_i, \w_{i,j}^*)$ on $V$.   Many evolutional algorithms could contain the same misconduct because agents have $V$ but lack a future test $T$. 

Weng \cite{WengCLICCE22}  suggested further to estimate the sensitivity of the expected error of $\a$ on future test $T$  (i.e., Type-3 luck).  The lower the sensitivity, the better the stability of the performance from architecture $\a$.  In contrast, Weng \cite{WengDN3-RS22} uses a dynamic (developmental) architecture that starts from a single cell.

Weng \cite{WengCLAIEE22} reasoned that Deep Learning is like a liar in a lottery who claims that his scheme is intelligent.  The liar reports only the luckiest ticket's money return on $V$ to imply the money return on a future lottery event $T$.  This is baseless.  He must report the average of all $n$ lottery tickets on $V$.

\section{Input Cross-Validated Post-Selection}
\label{SE:CVPost}

In this section, we discuss why traditional (input) cross-validation cannot rescue Post-Selection.   Fig.~\ref{FG:Nest-Cross-Validation} is for cross-validation for both input and output.   

\begin{figure}[tb]
     \centering
     \includegraphics[width=0.48\textwidth]{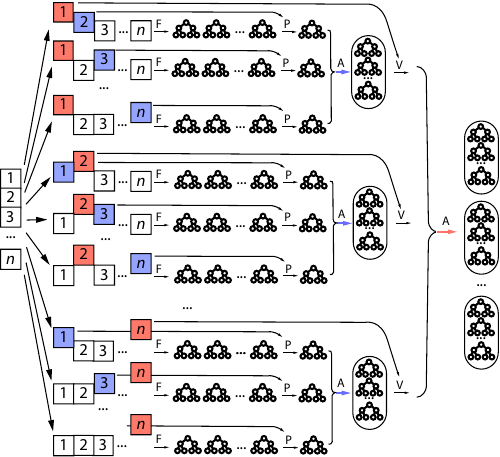}
     \caption{Nest cross-validation for post-selections.  Operator abbreviations:   F: Fit.  P: Post-Selection.  V: Validation. A: Average.  The nest cross-validation has two stages, early cross-validation (blue) before Post-Selection based on $V$, and the later cross-validation (red) after Post-Selection. 
 A blue arrowed A is due to the early cross-validation using blue data folds as validation.  The red-arrowed A is due to the latter cross-validation using the red data folds as validation.  Cross-validation results in a system that consists of all networks (or a sufficient number of representatives) that participate in the average performance.   The post-selection P here selects the luckiest single network (or few $m \ll n$) according to the error on the blue validation set.}
 \label{FG:Nest-Cross-Validation}
\end{figure}

We prove that NNWT and PGNN can reach a zero cross-validated error and they ``beat'' all machine learning systems that use Post-Selection.

For simplicity, we consider $n$-fold cross-validation.   Almost all existing papers on Deep Learning do not
have a cross-validation.   

NNWT is probably faster than PGNN to develop since NNWT tries to use the distance between a query and its nearest neighbor.  However, PGNN is simpler than NNWT by simply setting the threshold to zero. 

First, consider NNWT.  NNWT stores all samples in $F$.  Given any query $\x_q$, NNWT finds the nearest 
neighbor $\x_i$ in $F$.  If the distance between $\x_q$ and $\x_i$ is smaller than the threshold, NNWT outputs the label of $\x_i$.  Otherwise, NNWT guesses an $\x_i$ specific label for $\x_i$ when the query is 
farther than the threshold. 

\begin{theorem}[Input Cross-Validated Post-Selection]
Suppose that the fit set and validation set are both in the possession of the author.  Then, NNWT and
PGNN with input cross-validation can give a zero validation error. 
\end{theorem}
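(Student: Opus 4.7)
The plan is to reduce the cross-validated case to $n$ independent single-fold invocations of the Post-Selection constructions for NNWT and PGNN established in \cite{WengMisleadAIEE23,WengMisconduct23}. First I would fix an $n$-fold partition $D = D_1 \cup D_2 \cup \cdots \cup D_n$ with $F_v = D \setminus D_v$ and $V_v = D_v$, and observe the crucial hypothesis: because both $F_v$ and $V_v$ are in the author's possession, no data barrier exists that prevents the trainer from inspecting $V_v$ while constructing the learner on $F_v$. Cross-validation on data splits partitions the data but does not hold any set out from the author, so each fold inherits exactly the setting under which the original zero-error results hold.

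Next I would instantiate the per-fold classifiers. For NNWT on fold $v$, the author stores $F_v$, picks a threshold $\tau_v$, and for every query whose nearest neighbor in $F_v$ lies within $\tau_v$ returns that neighbor's label, otherwise returns a guessed label. Since $V_v$ is visible, the author enumerates (or randomly samples) guess vectors and, via Post-Selection, retains the one matching every label in $V_v$; this yields $e(L_v(F_v), V_v) = 0$ as in \cite{WengMisleadAIEE23}. For PGNN the same argument applies with $\tau_v = 0$, so the classifier is always in guess mode and the enumeration over guesses directly hits the labels of $V_v$, again producing $e(L_v(F_v), V_v) = 0$ as in \cite{WengMisconduct23}.

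Finally I would substitute these per-fold zeros into the cross-validated error of Eq.~\eqref{EQ:sampleaverage}:
\begin{equation*}
\frac{1}{n}\sum_{v=1}^{n} e(L_v(F_v), V_v) = \frac{1}{n}\sum_{v=1}^{n} 0 = 0,
\end{equation*}
which establishes the theorem. Note that the per-fold NNWT and PGNN need not be the same classifier across folds; the cross-validation protocol accepts any learner $L_v$ produced from $F_v$, and the Post-Selection step is performed independently within each fold, so no consistency constraint across folds has to be met.

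The main obstacle is conceptual rather than technical: one must emphasize that traditional cross-validation enlarges the number of validation sets from one to $n$ but never introduces a truly held-out test set. Consequently the Post-Selection misconduct replicates per fold, and averaging zeros still gives zero. This is precisely the loophole that distinguishes data-split cross-validation from the general cross-validation demanded by Theorem~\ref{TM:CV}, and it is what motivates the nested variant examined in Section~\ref{SE:CVNPost}.
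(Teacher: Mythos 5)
Your proposal is correct and follows essentially the same route as the paper's own proof: it reduces the cross-validated error to $n$ per-fold applications of the NNWT/PGNN zero-validation-error results from \cite{WengMisleadAIEE23,WengMisconduct23} and then averages the resulting zeros. The extra detail you supply about enumerating guess vectors and the remark that folds need not share a classifier are consistent elaborations of the same argument, not a different method.
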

\begin{proof}
The traditional Cross-Validation gives $n$ networks, not one.   The error from the cross-validation is the average of $n$ networks.  Therefore, NNWT also gives $n$ networks.  For each 
$F_i = D-V_i$, NNWT gives one network $N_i$ that perfectly fits $F_i$.   For each sample in 
$V_i$, the network $N_i$ guesses a label if the sample is far.  As proven in \cite{WengMisleadAIEE23}, $N_i$ 
gives a zero error on $V_i$, $i=1, 2, .., n$. The proof for PGNN is similar.  As proven in \cite{WengMisconduct23}, PGNN also gives a zero error on $V_i$, $i=1, 2, .., n$. 
\end{proof}

There are at least two problems with NNWT and PGNN.  

First, they may take a very long time the get $n$ networks ready, meaning a very long developmental time that is not taken into account by Post-Selection.  

Second, they generalize poorly.   NNWT assumes that the label of a query $\x_q$ is a function of its distance to the nearest neighbor in $F_i$, which is often not true in real-world problems.  PGNN assumes that the label of a query $\x_q$ is a constant label of its nearest neighbor in $F_i$, which is often also not true in real-world problems.  

\section{Nest-Cross-Validated Post-Selection}
\label{SE:CVNPost}

The bested cross-validation performs cross-validation for both ends, the input end data splits and 
the output end data splits, as shown in Fig.~\ref{FG:Nest-Cross-Validation}.  The new idea of 
nest cross-validation is to assign a data fold not only for input but also a data fold for output. 

Let us recall that Super Learner \cite{Laan07} does not propose this nest cross-validation. For example, Fig. 1 in \cite{Laan07} contains only cross-validation for input, but not for output.  The paper text does not mention cross-validation for output either. 

\begin{theorem}[Nest-Cross-Validated Post-Selection]
Suppose that the fit set and validation set are both in the possession of the author of a paper, but not the test set.  Then, NNWT and
PGNN with input-output cross-validation can give a zero validation error. 
\end{theorem}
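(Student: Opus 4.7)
The plan is to reduce the nested scheme to two applications of the preceding Input Cross-Validated Post-Selection Theorem. First I would set up notation for the nested partition: split $D$ into outer folds $D_u$, $u=1,\dots,N$, with outer validation $V_u = D_u$ and inner pool $\tilde F_u = D - D_u$; then for each $u$ split $\tilde F_u$ into inner folds $D_{u,v}$, $v=1,\dots,n$, giving inner fit $F_{u,v} = \tilde F_u - D_{u,v}$ and inner validation $V_{u,v} = D_{u,v}$. This mirrors Fig.~\ref{FG:Nest-Cross-Validation}, with the blue (early) cross-validation running over $v$ and the red (later) cross-validation running over $u$.

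Next, for each fixed $(u,v)$ I would invoke the Input Cross-Validated Post-Selection Theorem on the pair $(F_{u,v}, V_{u,v})$. NNWT stores every sample of $F_{u,v}$; on any query $\x_q \in V_{u,v}$, if the nearest neighbor lies within threshold, the stored label is returned, otherwise a query-specific guess is emitted. Because the author possesses $V_{u,v}$, the guessed labels can be chosen to match the recorded targets, so the inner validation error is identically zero, as shown in \cite{WengMisleadAIEE23}. The same argument works for PGNN with zero threshold \cite{WengMisconduct23}. Post-Selection on the blue average then trivially retains a zero-error network $N_u$ for each outer fold.

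The outer (red) step then repeats the trick. By hypothesis the author also possesses every $V_u$, so $N_u$ can be configured, using the same far-neighbor guessing mechanism that NNWT and PGNN license, to emit exactly the stored label of each $(\x_q,\z_q)\in V_u$. Averaging zero across the $N$ outer folds yields a red cross-validated error of zero, which is the claimed conclusion.

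The step requiring the most care is the interaction between the two cross-validation levels: the learner committed after the inner Post-Selection must still be free to answer outer validation queries with the correct labels. This is resolved by observing that NNWT and PGNN permit the guessed label to be specified as an arbitrary function of the query $\x_q$ rather than as a single global constant; the author's precomputed lookup table can therefore simultaneously cover every far query in every $V_{u,v}$ and every far query in every $V_u$ without internal contradiction. Once this observation is in place, the result follows immediately from the previous theorem applied twice, once along each cross-validation axis.
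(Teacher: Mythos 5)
Your proposal is correct and rests on the same key lemma as the paper: NNWT and PGNN achieve zero error on any validation fold in the author's possession (via the far-query guessing mechanism of \cite{WengMisleadAIEE23,WengMisconduct23}), so averaging over folds gives a cross-validated error of zero. The route differs in how the nesting is handled. The paper flattens the two levels into a single product index: it forms $kn$ pairs $(F_{ij}, V_{ij})$ with $F_{ij}=D-V_{ij}$ and applies the single-level zero-error result to each pair independently; the Post-Selection step sitting between the blue and red stages of Fig.~\ref{FG:Nest-Cross-Validation} plays no role in the argument. You instead treat the scheme as genuinely hierarchical: the previous theorem is applied once per inner fold, Post-Selection then commits a network $N_u$ per outer fold, and you separately argue that this committed $N_u$ still answers the outer validation queries correctly. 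Your added observation---that the guessed label may be an arbitrary function of the query, so one precomputed lookup table can serve the inner and outer validation sets simultaneously without contradiction---is exactly the consistency point the paper's flat indexing silently assumes, and it is what makes the two-level reduction go through. Your version is therefore a more faithful reading of the nested protocol and closes a small gap the paper leaves implicit; the paper's version is shorter because it never lets the intermediate Post-Selection constrain the outer stage.
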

\begin{proof}
The traditional input-output Cross-Validation gives $kn$ networks, not one, where $n$ is the fold for 
input cross-validation and $k$ for output cross-validation.   The error from the cross-validation is the average of $kn$ networks.  Therefore, NNWT also gives $kn$ networks.  For each 
$F_{ij} = D-V_{ij}$, $i=1, 2, ... , n$ for inputs, $j=1, 2, ..., k$ for outputs, NNWT gives one network $N_{ij}$ that perfectly fits $F_{ij} $.  For each sample in 
$V_{ij}$, the network $N_{ij}$ gueses a label if the sample is far.  As proven in \cite{WengMisleadAIEE23}, $N_{ij}$ 
gives a zero error on $V_{ij}$, $i=1, 2, .., n$, $j=1, 2, ... , k$. The proof for PGNN is similar.  As proven in \cite{WengMisconduct23}, PGNN also gives a zero error on $V_{ij}$, $i=1, 2, .., n$, $j=1, 2, ... , k$. 
\end{proof}
 
 As we discussed in the above section, NNWT, and PGNN, the above problems also exist for nest cross-validation that use Post-Selections.   Therefore, both NNWT and PGNN are impractical.  They give 
 misleading low validation errors for both input and output.  Consequently, the Super Learner \cite{Laan07}
 would give 100\% weight to them and drop all other candidate classifiers, but these two classifiers are 
 impractical.   
 
 The superficially low validation error from them, although cross-validated at both ends, 
 is not transferrable to a future test.   Tables 2, 3, and 4 in \cite{Laan07} indicate that the weighted sum from
 the Super Learner fits the validation set $V$ better, but these tables lack a test that corresponds to the ``future'' test $T$ here.
 
 Laan et al. \cite{Laan07} wrote, ``For the 4 studies (synthetic data), the learning sample contained 200 observations and the evaluation sample contained 5000 observations.  However, they seem to talk about 
 $V$ that contains 5000 observations, not a new test after their Post-Selection.  The author assumes that Laan et al. \cite{Laan07} lacks a test.  If they did, the test error would still be about the average over all the candidates.

\section{Social Issues}
\label{SE:social}
Social issues are always highly complex since they must deal with intelligence from human groups.  The perspectives provided above seem to be useful for social issues. 

\subsection{Post-Selection is Invalid}

Is Post-Selection valid?   The work here has proven that Post-Selection is invalid statistically even in the presence of nest cross-validation.   Post-selection deals with only a single random sample in the space of $F$ and $V$.  The single sample represents a single generation, a limited scope in geometry, and a limited span in history.  

It seems to be addictive for a human group to simply Post-Select a model from a limited historical context (a
scientific experiment) without considering that the Post-Selection step is contaminated by the
bias from the human group.  The group hopes to hand-pick a random sample with minimum ``error'', but what it should do is compute the average of all random samples. 

\subsection{Resources of Development}

Another limitation of Post-Selection is a lack of consideration of the resources used in the development of a model, such as NNWT and PGNN, before its lucky learner enters the Super Learner.  The resources include storage space, the amount of computation of a model, as well as the time and manpower that have been spent to come up with a lucky model before the model enters the Post-Selection step that the Super Learner represents. 

In developmental psychology, this corresponds to the cost of cognitive development in a lifetime.   The corresponding question for a more general question---human development---is: How much does it cost to raise a child?   The more expensive a policy is, the less practical the policy is.  The more variation the resulting children are, the less reliable the policy is.  Within a large population of citizens, the luckiest child does not count much in the minimum MSE.  We should examine the average and the five percentage positions of the ranked performances. 

The same issue also exists in the development of nations.  Let us consider some examples.

\paragraph{US development} 
If the U.S. must be developed by causing many atrocities in other nations, such as the Vietnam War and the Russian-Urkrane war, then the policy of U.S. development is not very desirable overall.  Namely, post-selection of the U.S. nation among many nations by hiding other ``bad-looking'' nations like Ukraine, Afghanistan, and North Korea, is not a scientifically valid methodology.  In worldwide human development, how do we evaluate how well humans are developing?

\paragraph{USSR development} Suppose that the USSR developed a small set of national pride, such as Communist ideology, planned economy, and arms race.  Post-selection of such a national pride policy would hide other less proud aspects of the USSR, such as a shortage of daily goods and low average income (not 
the superficial per capita GDP).   Did the isolation of the USSR by the West Bloc hinder or slow down the internal reform in the USSR?  

\paragraph{Ukraine and Palestinian development}   Among several strategies, from violent wars against Russia and Israel, respectively, to compromising with these ``enemies'', Post-Selection of national pride at the expense of other human developmental metrics, seems to be simple-minded. 

This paper does not directly address national development {\em per se}.  Our statistical framework seems to also benefit national development.   Namely, we should not disregard the resource of national development, and only consider a post-selected, the luckiest system on a validation set and consequently hide all other less lucky systems.  Short-sighted human behaviors are common.  A politician would say, ``Regardless of what cost we will pay, we must ... ''.

\section{Conclusions}
\label{SE:conclusions}

This paper gives new and stronger theoretical results than the first three papers on Deep Learning misconduct \cite{WengPSUTS21,WengMisleadAIEE23,WengMisconduct23}.  (A) Generalized cross-validation should be applied to not only random weights but also searched or manually tuned
architecture hyperparameters.  This means that all Post-Selection methods do much worse than their authors reported.   (B) Cross-validation for data splits, at both the input end and the output end does not rescue Post-Selection.  The performance of a future test is still the average, much worse than the authors reported.   In other words, traditional cross-validation does not rescue Post-Selections from the misconduct.  (C) The general cross-validation principle might have some applications in social sciences, such as national development and human development.  Post-selection of certain national pride based on a validation set, such as making enemies, violence, and ideologies, appears not optimal for a future test.

\section*{Achnowlegements}
The author would like to thank Drs. Hongxiang (David) Qiu, Mark J. van der Laan, Eric. C. Polley, and Alan E. Hubbard on discussions of Super Learner. 

\bibliographystyle{plain}
\bibliography{shoslifref}

\end{document}